\documentclass[runningheads]{llncs}

\usepackage{eccv}

\usepackage{marvosym}

\usepackage{eccvabbrv}

\usepackage{graphicx}
\usepackage{booktabs}
\usepackage[table]{xcolor}
\usepackage[accsupp]{axessibility}  

\usepackage{hyperref}

\usepackage{orcidlink}

\newcommand{\blfootnote}[1]{%
	\begingroup
	\renewcommand\thefootnote{}
	\footnotetext{#1}%
	\endgroup
}

\begin{document}

\title{
Clinical Cognition Alignment for Gastrointestinal Diagnosis with Multimodal LLMs
}

\titlerunning{CogAlign}

\author{
Huan Zheng$^{1,*}$,Yucheng Zhou$^{1,*}$, Tianyi Yan$^1$, Dubing Chen$^1$, \\ Hongbo Lu$^{2,3}$, Wenlong Liao$^2$, Tao He$^2$, Pai Peng$^2$, Jianbing Shen$^{1,}$$^{\text{\Letter}}$
}

\authorrunning{H. Zheng et al.}

\institute{
$^1$SKL-IOTSC, CIS, University of Macau \\
$^2$Shanghai Jiao Tong University \\
$^3$COWARobot Co. Ltd.
}

\maketitle

\begin{abstract}
Multimodal Large Language Models (MLLMs) have demonstrated remarkable potential in medical image analysis. 
However, their application in gastrointestinal endoscopy is currently hindered by two critical limitations: the misalignment between general model reasoning and standardized clinical cognitive pathways, and the lack of causal association between visual features and diagnostic outcomes. 
In this paper, we propose a novel Clinical-Cognitive-Aligned (CogAlign) framework to address these challenges. 
First, we endow the model with rigorous clinical analytical capabilities by constructing the hierarchical clinical cognition dataset and employing Supervised Fine-Tuning (SFT). 
Unlike conventional approaches, this strategy internalizes the hierarchical diagnostic logic of experts, ranging from anatomical localization and morphological evaluation to microvascular analysis, directly into the model. 
Second, to eliminate visual bias, we provide a theoretical analysis demonstrating that standard supervised tuning inevitably converges to spurious background correlations. 
Guided by this insight, we propose a counterfactual-driven reinforcement learning strategy to enforce causal rectification. 
By generating counterfactual normal samples via lesion masking and optimizing through clinical-cognition-centric rewards, we constrain the model to strictly ground its diagnosis in causal lesion features. 
Extensive experiments demonstrate that our approach achieves State-of-the-Art (SoTA) performance across multiple benchmarks, significantly enhancing diagnostic accuracy in complex clinical scenarios.

\keywords{Multimodal Large Language Models \and Gastrointestinal Diagnosis \and Clinical Cognition Alignment \and Counterfactual-Driven GRPO}
\end{abstract}

\blfootnote{$*$Equal contribution. $\text{\Letter}$ Corresponding author: \textit{Jianbing Shen}. }

\section{Introduction}
Gastrointestinal (GI) malignancies constitute a substantial portion of the global cancer burden, establishing endoscopic screening as the gold standard for early detection and intervention \cite{motta2021gastrointestinal}. 
Given the high dependence on operator experience and the inherent inter-observer variability in clinical practice, computer-aided diagnosis systems have emerged as a critical support tool to mitigate miss rates \cite{doi2007computer,yanase2019systematic}. 
Over the past decade, data-driven deep learning approaches, particularly Convolutional Neural Networks (CNNs) \cite{he2016deep} and Vision Transformers (ViTs) \cite{dosovitskiy2020image,vaswani2017attention}, have demonstrated expert-level proficiency in specialized tasks such as polyp detection \cite{soleymanjahi2024artificial} and lesion classification \cite{thieme2023deep,zhao2026agentic}. 
Conventional paradigms are fundamentally restricted by their closed-set nature and opaque decision-making processes. 
These discriminative models typically function as silent classifiers that output rigid categorical labels without providing the underlying diagnostic rationale \cite{zhou2026medical,wang2025survey}. 
Such opacity precludes clinical validation, undermining the diagnostic reliability required for high-stakes medical environments.

The recent advent of MLLMs marks a transformative shift from specialized discriminative models to generalized reasoning agents \cite{shool2025systematic,xu_2026_CVPR,xiao2026reversible}. 
By synergizing the perceptual capabilities of advanced visual encoders with the extensive knowledge and inferential power of Large Language Models (LLMs), MLLMs introduce a versatile framework for endoscopic analysis \cite{liu2025endobench,jiang2025hulu}. 
Unlike their predecessors, these foundation models possess the unique capacity to process visual information and generate coherent linguistic descriptions simultaneously \cite{xu2025lingshu}. 
This paradigm offers the potential to mimic the workflow of an endoscopist by identifying pathological features and providing comprehensive reports \cite{chen2024towards}. 

\begin{figure}[t]
    \centering
    \includegraphics[width=\linewidth]{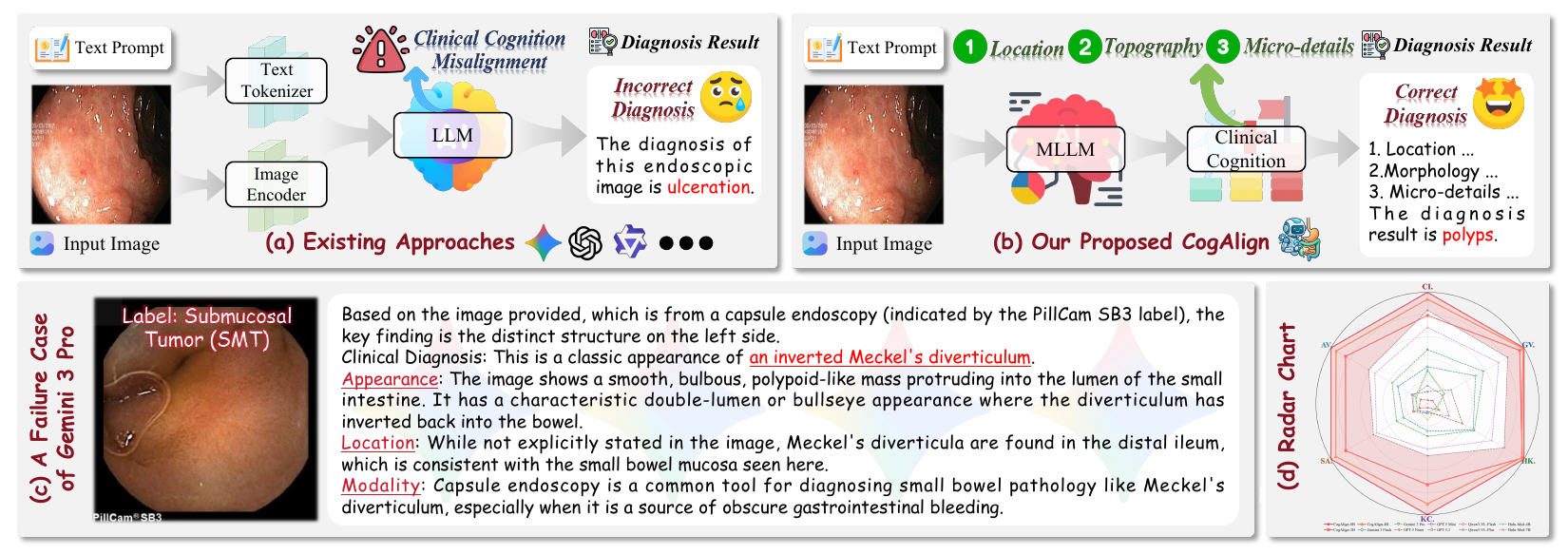}
    \vspace{-7mm}
    \caption{\textbf{Illustration of the motivation.}
  (a) Existing methods suffer from clinical cognition misalignment.
  (b) Our CogAlign framework enforces a strict clinical cognitive flow.
  (c) A representative failure case generated by Gemini 3 Pro.
  (d) A radar chart highlighting the superior accuracy of CogAlign across diverse benchmarks.
    }
    \label{fig:motivation}
    \vspace{-6mm}
\end{figure}

Despite this promise, the direct deployment of general MLLMs \cite{jiang2025hulu,bai2025qwen3} in gastrointestinal endoscopy is hindered by two critical limitations, as illustrated in Fig.~\ref{fig:motivation} (a) and (b). 
The first is the misalignment between general model reasoning and standardized clinical cognitive pathways. 
In clinical practice, an endoscopist's diagnosis follows a rigorous, hierarchical cognitive flow: initially localizing the anatomical site, subsequently evaluating morphological features, analyzing micro-details, and finally concluding with a diagnosis. 
In contrast, general MLLMs often exhibit scattered reasoning, skipping critical analytical steps or hallucinating non-existent features. 
This cognitive gap renders their outputs unreliable for high-stakes medical decisions. 
The second limitation is the lack of causal association between visual features and diagnostic outcomes. 
MLLMs are susceptible to confounding visual factors, frequently relying on spurious correlations in the background, rather than characterizing the pathological lesion itself. 
As shown in the failure case in Fig.~\ref{fig:motivation}(c), even advanced models like Gemini 3 Pro can be misled by environmental artifacts, causing them to hallucinate a diagnosis based on the capsule modality context rather than the actual submucosal tumor features. 
This absence of causal grounding makes the models brittle and prone to failure when deployed in diverse clinical environments where such artifacts vary. 
As shown in Fig.~\ref{fig:motivation}(d), these deficiencies collectively constrain the diagnostic capability of existing models, resulting in suboptimal accuracy.

To address these challenges, we propose CogAlign for gastrointestinal diagnosis. 
Our approach is designed to bridge the gap between general reasoning and expert clinical protocols while ensuring diagnoses are strictly grounded in medical visual features. 
First, to tackle the clinical cognition misalignment, we construct a hierarchical clinical cognition dataset that encapsulates the step-by-step diagnostic logic of experts. 
We internalize this structured assessment process into the model, enforcing a diagnostic trajectory that moves strictly from anatomical localization and morphological evaluation to micro-details analysis. 

Second, to resolve the issue of visual bias, we provide a theoretical analysis demonstrating that standard supervised tuning inevitably converges to spurious background shortcuts. 
Guided by this insight, we introduce a counterfactual-driven Group Relative Policy Optimization (GRPO) strategy for causal rectification. 
By masking lesion areas to generate counterfactual normal samples, we construct a counterfactual reference to isolate lesion-specific features. 
We then optimize the model using clinical-cognition-centric rewards, constraining the outcomes to be causally grounded in visual evidence of the lesion rather than background.
Our contributions are summarized as follows:
\begin{itemize}
    \item We propose CogAlign, a novel framework bridging the gap between general model capabilities and specialized clinical requirements. It integrates hierarchical cognitive tuning with counterfactual-driven reinforcement learning to ensure reliable gastrointestinal diagnosis.
    \item We construct a new dataset and apply SFT to instill rigorous analytical capabilities. This allows the model to emulate expert logic, progressing systematically from anatomical localization to microscopic detail analysis.
    \item We theoretically demonstrate that standard tuning relies on spurious background shortcuts and introduce a counterfactual-driven GRPO strategy to rectify this bias. Using counterfactual normal samples and clinical-cognition-centric rewards, we enforce strict causal grounding in pathological features.
    \item Extensive evaluations confirm that our approach achieves SoTA performance.
\end{itemize}

\section{Related Work}

\subsection{Medical Multimodal Large Language Models}
The rapid evolution of general Multimodal Large Language Models (MLLMs) has sparked significant interest in adapting these foundation models for the medical domain \cite{chen2025shizhengpt,zhou2026comem,zhang2026pi,mullappilly2024bimedix2,zhou2025gsq,zhu2025pathology,zhu2026medeyes}. 
By aligning powerful vision encoders with autoregressive language models, researchers have developed systems capable of interpreting complex clinical imagery and generating coherent text \cite{lin2025healthgpt, moor2023med}. 
Early pioneering models such as LLaVAMed \cite{li2023llava} demonstrated the feasibility of adapting general visual instruction tuning to biomedicine \cite{lai2026med}. 
These systems rely on vast datasets of image and text pairs to achieve proficiency in tasks like medical visual question answering, radiology report generation, and broad clinical reasoning \cite{zhang2024generalist, ning2025unimedvl,sun2025chiron}. 

Recent progress has focused on improving domain specific accuracy through parameter efficient fine tuning techniques \cite{hu2022lora} and specialized medical instruction datasets \cite{pan2025medvlm}.
Researchers have successfully scaled these architectures to handle diverse modalities including X-rays, magnetic resonance imaging, and histopathology slides \cite{wang2024llm,zhou2025improving,zhou2025mam,zhou2024visual}.
Despite these impressive capabilities \cite{alkhaldi2024minigpt}, current medical foundation models frequently struggle with diagnostic reliability in high stakes environments. 
They are prone to visual hallucinations and often act as superficial pattern matchers rather than genuine reasoning agents. 
Furthermore, standard training paradigms fail to enforce structured clinical logic, causing these models to skip critical analytical steps. 
They also exhibit severe vulnerability to visual bias, frequently grounding their textual outputs in spurious background correlations rather than genuine pathological evidence. 
Overcoming these fundamental limitations remains a primary hurdle for deploying multimodal models in reliable clinical assistance.

\subsection{Gastrointestinal Disease Diagnosis}
Computer Aided Diagnosis systems have become an integral component of modern gastroenterology, designed to assist clinicians in mitigating interobserver variability and reducing lesion miss rates during endoscopic screening \cite{ramoni2025artificial,lin2026models,zhu2026medsynapsevbridgingvisualperception}. 
Over the past decade, the field has been dominated by discriminative deep learning paradigms \cite{kroner2021artificial}. 
Convolutional Neural Networks and Vision Transformers have been extensively engineered to tackle specific gastrointestinal tasks, achieving expert level accuracy in polyp detection, anatomical landmark recognition, and ulcer classification \cite{fan2020pranet,roth2024domain}. 
Advanced segmentation and object detection methods have been tailored to address the unique visual challenges of endoscopy, such as varying illumination, diverse organ topologies, and specular reflections \cite{hu2026pranet,soleymanjahi2024artificial}.

However, the clinical utility of these conventional methods is inherently restricted by their closed set nature and opaque decision making processes \cite{azad2024advances}. 
Traditional models function as silent classifiers that output rigid categorical predictions without providing the underlying diagnostic rationale \cite{he2025divgi}. 
To address the need for interpretability, recent literature has begun exploring report generation for endoscopy using vision language frameworks \cite{shu2025fleming,nath2025vila}. 
While these preliminary multimodal approaches can produce descriptive text, they generally treat endoscopic analysis as a standard image captioning problem \cite{deria2026medmo}. 
They fail to reflect the rigorous cognitive workflow of a senior endoscopist, which sequentially progresses from spatial anatomical localization to morphological assessment and finally to microscopic detail analysis \cite{mullappilly2026medix}. 
Consequently, current models lack causal diagnostic grounding and remain highly susceptible to environmental noise such as surgical instrument artifacts and mucosal bubbles. 

\begin{figure}[t]
    \centering
    \includegraphics[width=\linewidth]{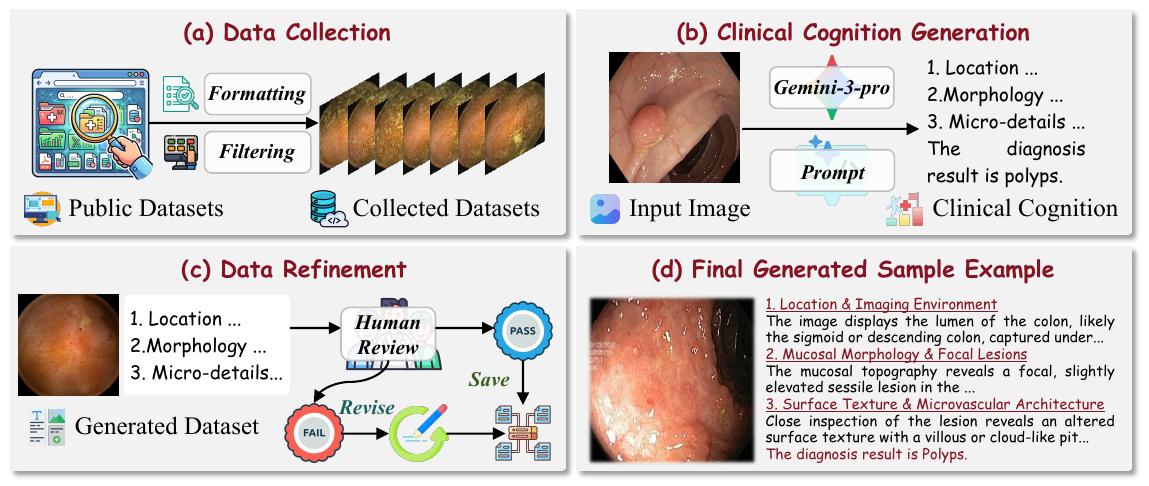}
    \vspace{-7mm}
    \caption{\textbf{Overview of the dataset curation pipeline.} (a) shows the collection and filtering of diverse endoscopic images. (b) shows the generation of hierarchical clinical cognition reasoning chains. (c) shows the human expert refinement process to eliminate hallucinations. (d) shows a generated sample example.}
    \label{fig:dataset}
    \vspace{-6mm}
\end{figure}

\section{Hierarchical Clinical Cognition Dataset}
\label{sec:dataset}
Current public datasets for gastrointestinal endoscopy primarily consist of image-label pairs, lacking the intermediate reasoning steps required for transparent diagnosis \cite{vallee2020crohnipi,jha2023gastrovision,borgli2020hyperkvasir}. 
Training on such data encourages models to learn shortcut features rather than clinical logic. 
Hence, we construct a hierarchical clinical cognition dataset designed to instill expert-level cognitive patterns into the MLLM.

\subsection{Clinical Cognitive Hierarchy Definition}
We define a standardized diagnostic protocol derived from the cognitive workflows of expert gastroenterologists.
Unlike general image captioning, our annotation schema enforces a strict coarse to fine reasoning flow comprising three distinct stages prior to the final diagnosis. This hierarchical structure accurately mirrors the cognitive process of medical experts:
\begin{enumerate}
    \item \textbf{Anatomical Localization:} Identification of the specific organ segment to provide essential spatial context and document the imaging conditions.
    \item \textbf{Morphological Evaluation:} Assessment of macroscopic features, encompassing lesion shape, elevation, size, color, and boundaries.
    \item \textbf{Micro-detail Analysis:} Scrutiny of fine grained surface patterns, such as villous structures, alongside vascular configurations.
\end{enumerate}

\subsection{Human-in-the-Loop Curation Pipeline}
Manually annotating reasoning chains for large scale medical data is prohibitively expensive and time consuming. 
Therefore, we design a semi-automated curation pipeline incorporating a rigorous human in the loop mechanism.

During the data collection phase shown in Fig.~\ref{fig:dataset}(a), we aggregate diverse endoscopic images from public websites. 
A dedicated filtering process ensures the diversity and quality. 
Then, in the clinical cognition generation phase depicted in Fig.~\ref{fig:dataset}(b), we leverage Gemini 3 Pro \cite{gemini3_report}, to act as a teacher model. 
By utilizing a specific prompt that explicitly outlines the three stage hierarchy defined above, we query the teacher model to generate structured reasoning descriptions for each input image. 
Finally, to eliminate potential hallucinations inherent to general multimodal models, we implement a data refinement phase detailed in Fig.~\ref{fig:dataset}(c).
Human experts meticulously review the generated annotations. 
Annotations that pass the review are saved automatically, whereas samples containing factual errors fail the initial inspection and undergo manual revision by the experts. 

\subsection{Dataset Overview}
We construct a comprehensive endoscopic dataset from five prominent public repositories, namely CrohnIPI \cite{vallee2020crohnipi}, GastroVision \cite{jha2023gastrovision}, HyperKvasir \cite{borgli2020hyperkvasir}, Kvasir-Capsule \cite{smedsrud2021kvasir}, and The SEE-AI Project \cite{yokote2024small}, we assemble a total corpus of 24,515 samples. 
We establish a stratified split comprising 19,736 samples for training and 4,779 samples for testing.
Specifically, the dataset encompasses 23 distinct single-label categories and 49 complex multi-label pathology combinations.
As demonstrated by the shown example in Fig.~\ref{fig:dataset}(d), this curation process yields a high-quality dataset denoted as $\mathcal{D} = \{(\mathbf{x}_i, \mathbf{q}_i, \mathbf{r}_i, \mathbf{l}_i)\}_{i=1}^N$. 
In this formulation, $\mathbf{x}_i$ represents the image, $q_i$ is the diagnostic query, $\mathbf{r}_i$ signifies the verified hierarchical clinical cognition reasoning chain, and $\mathbf{l}_i$ denotes the diagnostic label.

\section{Methodology}

\subsection{Problem Definition}
As illustrated in Fig.~\ref{fig:framework}, the proposed CogAlign framework is designed to enforce a dual alignment: (1) aligning the model's reasoning process with the standardized hierarchical cognitive pathways of clinical experts, and (2) grounding with causal pathological features rather than spurious background correlations.

Formally, given an image $\mathbf{x}$ and a diagnostic instruction $\mathbf{q}$, our goal is to generate a response $\mathbf{y}$ that not only provides the correct diagnostic label $\mathbf{l}$ but also produces a structured reasoning chain $\mathbf{r}$ that mirrors clinical standards:
\begin{align}
\mathbf{y} = \mathbf{r} \oplus \mathbf{l} = \mathop{\arg\max}_{\mathbf{y}} P(\mathbf{y} | \mathbf{x}, \mathbf{q}; \theta),
\end{align}
where $\theta$ represents the trainable parameters of the MLLM, and $\oplus$ denotes the sequential concatenation of the reasoning process and the conclusion.

\begin{figure}[t]
    \centering
    \includegraphics[width=0.99\linewidth]{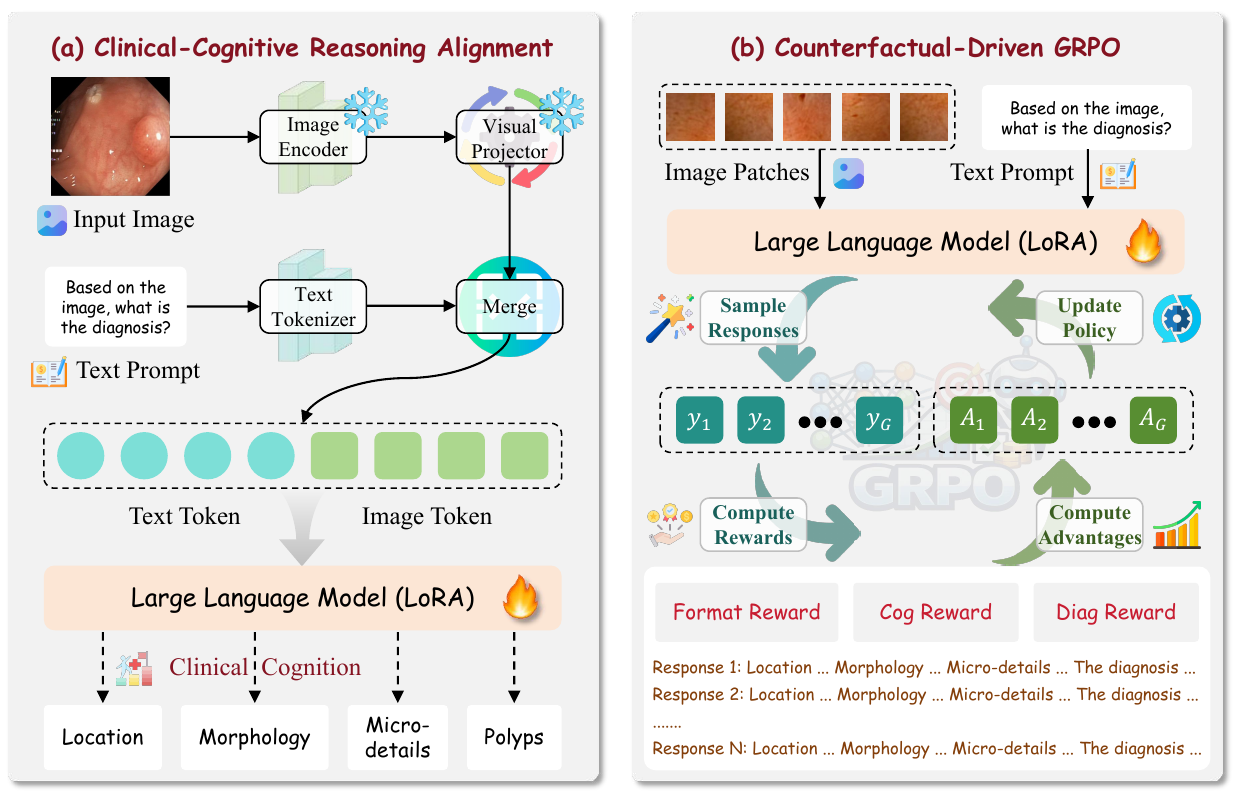}
    \vspace{-3mm}
    \caption{\textbf{Overview of the proposed CogAlign framework.} The pipeline consists of two fundamental stages. Left panel demonstrates the clinical cognitive reasoning alignment phase, where the multimodal large language model undergoes supervised fine tuning.   
    Right panel details the reinforcement learning phase guided by counterfactuals.}
    \label{fig:framework}
    \vspace{-6mm}
\end{figure}

\subsection{Clinical-Cognitive Reasoning Alignment}

General MLLMs \cite{bai2025qwen3,gemini3_report}, while possessing broad semantic knowledge, operate within an unconstrained generative space that often diverges from the disciplined sequential logic of expert endoscopists. 
To bridge this gap, we implement a Clinical-Cognitive Reasoning Alignment phase via SFT. 
The primary objective of this stage is to constrain the model's generation manifold, forcing it to internalize the hierarchical reasoning chain $\mathbf{r}$, from anatomical localization to micro-detail analysis, before yielding a final diagnosis.

Formally, we utilize the hierarchical dataset $\mathcal{D} = \{(\mathbf{x}_i, \mathbf{q}_i, \mathbf{y}_i)\}_{i=1}^N$ constructed in Sec.\ref{sec:dataset}, where $\mathbf{y}_i = \mathbf{r}_i \oplus \mathbf{l}_i$ represents the target sequence concatenating the reasoning steps and the diagnostic conclusion. 
We employ a visual encoder to extract feature embeddings from the endoscopic image $\mathbf{x}_i$, which are projected into the LLM's embedding space. 
The model is then optimized to generate the target sequence $\mathbf{y}_i$ in an autoregressive manner, effectively modeling the joint probability of the reasoning rationale and the diagnostic outcome.
The optimization objective is defined as minimizing the negative log-likelihood of the next token:
\begin{align}
\mathcal{L}_{\text{SFT}}(\theta) = - \frac{1}{N} \sum_{i=1}^{N} \sum_{t=1}^{L_i} \log P(\mathbf{y}_{i,t} | \mathbf{x}_i, \mathbf{q}_i, \mathbf{y}_{i,<t}; \theta),
\end{align}
where $L_i$ denotes the length of the sequence $\mathbf{y}_i$, and $\theta$ represents the trainable parameters. 
Crucially, this objective enforces a strong statistical dependency: the final diagnosis $\mathbf{l}$ becomes a conditional consequence of the preceding morphological and micro-detail analysis contained within $\mathbf{r}$, rather than a direct, opaque classification from visual features.

\subsection{Visual-Cognitive Misalignment and Causal Rectification}
\label{sec:theory}

We provide a formal derivation of why SFT converges to a biased shortcut and how counterfactual intervention mathematically enforces causal grounding.

\begin{definition}[Latent Factor Model]
An image $X$ is generated by $\Psi: \mathcal{Z}_c \times \mathcal{Z}_e \rightarrow \mathcal{X}$, where $Z_c$ and $Z_e$ are causal and spurious latents. The diagnostic model is $f_{\theta}(X) = \sigma( \mathbf{w}^\top \phi(X) )$, where $\phi(X) = [\phi_c(Z_c); \phi_e(Z_e)]$ is the feature.
\end{definition}

\begin{definition}[Effective Feature Sensitivity]
The sensitivity of $f$ to factor $Z_i$ is defined as the norm of the Jacobian:
\begin{align}
    \mathcal{S}_i = \| \nabla_{Z_i} f_{\theta}(\Psi(Z_c, Z_e)) \|_2, \quad i \in \{c, e\}.
\end{align}
\end{definition}

\begin{theorem}[Shortcut Convergence in SFT]
Let $K(Z_e) < K(Z_c)$. Under gradient descent optimization of the SFT loss $\mathcal{L}$, the model parameters $\mathbf{w} = [\mathbf{w}_c; \mathbf{w}_e]$ satisfy $\| \mathbf{w}_e \| > \| \mathbf{w}_c \|$, leading to $\mathcal{S}_e > \mathcal{S}_c$.
\end{theorem}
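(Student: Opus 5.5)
We make the informal statement precise under simplifying assumptions and then derive it from implicit-bias results for gradient descent. The hypothesis $K(Z_e) < K(Z_c)$ is not defined earlier in the paper. We read $K(\cdot)$ as the \emph{learning complexity} of a factor: the minimal norm of a linear readout on $\phi_i(Z_i)$ that achieves a fixed margin on the training labels. Concretely, set
\begin{align}
K(Z_i) = \min\{\|\mathbf{u}\| : y\,\mathbf{u}^\top \phi_i(Z_i) \ge 1 \text{ on the training set}\}.
\end{align}
On the training distribution the background (e.g.\ capsule context) is spuriously but almost perfectly correlated with the label, and this is what makes $K(Z_e) < K(Z_c)$ plausible. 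We then take the SFT loss $\mathcal{L}$, restricted to the diagnostic token, to be the logistic loss of $f_\theta$, and study gradient descent on $\mathbf{w}$ with $\phi$ fixed (a linear-probe or NTK regime).

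\textbf{Step 1: direction of convergence.} We invoke the classical implicit-bias result of Soudry et al.: on linearly separable data, gradient descent on the logistic loss converges in direction to the max-margin (minimum-norm) separator,
\begin{align}
\mathbf{w}^\star = \arg\min \|\mathbf{w}\| \quad \text{s.t.} \quad y\,\mathbf{w}^\top\phi(X)\ge 1 .
\end{align}

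\textbf{Step 2: compare the min-norm solution with its competitors.} The block readout $[\mathbf{0};\mathbf{u}_e]$ is feasible with norm $K(Z_e)$. Any solution relying mainly on $\mathbf{w}_c$ costs roughly $K(Z_c)$.

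\textbf{Step 3: conclude $\|\mathbf{w}_e\| > \|\mathbf{w}_c\|$.} The conclusion follows once we know how the KKT conditions split the margin between the two blocks. In the orthogonal, axis-aligned toy case where both blocks separate the data, margins add, and minimizing $\|\mathbf{w}_c\|^2+\|\mathbf{w}_e\|^2$ subject to $a\|\mathbf{w}_c\|/K_c + b\|\mathbf{w}_e\|/K_e \ge 1$ with $a=b=1$ gives $\|\mathbf{w}_i\|\propto 1/K(Z_i)$. Hence $\|\mathbf{w}_e\|/\|\mathbf{w}_c\| = K(Z_c)/K(Z_e) > 1$.

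\textbf{Step 4: from weights to sensitivities.} By the chain rule,
\begin{align}
\nabla_{Z_i} f_\theta = \sigma'(\cdot)\, J_{\phi_i}(Z_i)^\top \mathbf{w}_i .
\end{align}
The factor $\sigma'$ is common to both blocks. So $\mathcal{S}_e > \mathcal{S}_c$ follows from Step 3 provided the feature Jacobians $J_{\phi_c}$ and $J_{\phi_e}$ have comparable conditioning. This is an extra assumption, for example that each $\phi_i$ is a near-isometry, or that the Jacobians have equal singular values along the relevant directions.

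\textbf{Main obstacle.} The hard part is Step 3 in general. Without block orthogonality and the additive-margin structure, the min-norm solution need not split the margin as $1/K$; correlated features can shift weight in either direction. We would first prove the result in the orthogonal Gaussian/linear case and state it as a theorem under that assumption. If we want more generality, we would also bound the deviation caused by cross-block correlations through a perturbation argument on the KKT system.
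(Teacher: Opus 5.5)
Your argument is sound in the regime you delimit, but it uses a different mechanism from the paper's. The paper argues about the \emph{transient} gradient flow. From $\dot{\mathbf{w}}_i=\eta\,\mathbb{E}[(Y-f)\,\phi_i(Z_i)]$ and the Simplicity Bias principle, read as $\|\phi_e(Z_e)\|\gg\|\phi_c(Z_c)\|$, it infers that $\mathbf{w}_e$ grows faster early. It then asserts that once $(Y-f)\to 0$ training stalls before $\mathbf{w}_c$ catches up, and it states $\mathcal{S}_e>\mathcal{S}_c$ without further argument. You argue about the \emph{limit} instead: gradient descent on the logistic loss converges in direction to the hard-margin separator, and you compute how that separator splits across blocks. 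Your Step 3 has a clean precise form. Let $v(p)=\sum_n p_n y_n\phi(X_n)$ for $p$ in the simplex. The max-margin direction is $v(p^\star)$ for $p^\star\in\arg\min_p\|v(p)\|$, and $\|v(p)\|^2=\|v_c(p)\|^2+\|v_e(p)\|^2$. If a single $p^\star$ minimizes both block norms (shared support vectors), then $\|\mathbf{w}_e\|/\|\mathbf{w}_c\|\to K(Z_c)/K(Z_e)$ exactly. Your route gives the undefined $K$ a precise meaning and yields an explicit limiting ratio. It also surfaces two assumptions the paper leaves implicit: how the margin is shared between blocks, and comparable Jacobians $J_{\phi_c},J_{\phi_e}$ for passing from weights to sensitivities. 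It further shows why the paper's ``stagnation'' step is weak: on separable data the residuals vanish only as $\|\mathbf{w}\|\to\infty$, and the eventual direction is set by the max-margin problem regardless of which block moved first. The paper's route, heuristic as it is, does speak to the finite-time, early-stopped regime that a 400-step SFT run actually operates in. Yours is only asymptotic, with directional convergence at rate $O(1/\log t)$, so state the conclusion as holding for all sufficiently large $t$. It also needs the background block to separate the training set \emph{exactly}. Under your definition, the ``almost perfectly correlated'' background you use as motivation gives $K(Z_e)=\infty$, and then the hypothesis $K(Z_e)<K(Z_c)$ fails.
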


\begin{proof}
Consider the gradient flow of the SFT objective $\mathcal{L} = -\mathbb{E}[Y \log f + (1-Y) \log (1-f)]$. The dynamics of the weights for each feature are:
\begin{align}
    \frac{d \mathbf{w}_c}{dt} &= -\eta \frac{\partial \mathcal{L}}{\partial \mathbf{w}_c} = \eta \mathbb{E} \left[ (Y - f) \cdot \phi_c(Z_c) \right] \\
    \frac{d \mathbf{w}_e}{dt} &= -\eta \frac{\partial \mathcal{L}}{\partial \mathbf{w}_e} = \eta \mathbb{E} \left[ (Y - f) \cdot \phi_e(Z_e) \right] 
\end{align}
According to the Simplicity Bias principle \cite{shah2020pitfalls}, for low-complexity features $Z_e$, the spectral norm of the corresponding feature mapping $\phi_e$ is larger and converges faster in the early stages of gradient descent:
\begin{align}
    \| \phi_e(Z_e) \| \gg \| \phi_c(Z_c) \| \implies \left\| \frac{d \mathbf{w}_e}{dt} \right\| > \left\| \frac{d \mathbf{w}_c}{dt} \right\|.
\end{align}
As $t \rightarrow \infty$, $(Y - f) \rightarrow 0$. Since $\mathbf{w}_e$ captured the majority of the variance early on, the optimization stagnates before $\mathbf{w}_c$ is fully learned, yielding $\mathcal{S}_e > \mathcal{S}_c$.
\end{proof}

\begin{theorem}[Causal Rectification via Counterfactual Penalty]
Let $\mathcal{R}_{cf} \!=\! \mathbb{E}[ f(\Psi(\mathbf{0}, Z_e))^2 ]$ be the counterfactual penalty. Minimizing the total objective $\mathcal{J} = \mathcal{L} + \lambda \mathcal{R}_{cf}$ as $\lambda \rightarrow \infty$ ensures $\mathcal{S}_e \rightarrow 0$.
\end{theorem}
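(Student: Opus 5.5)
We argue through the structure of the counterfactual input. On a counterfactual sample $\Psi(\mathbf{0},Z_e)$ the causal feature is fixed at $\phi_c(\mathbf{0})$, so the logit is
\begin{align}
g(Z_e)=\mathbf{w}_c^\top\phi_c(\mathbf{0})+\mathbf{w}_e^\top\phi_e(Z_e), \qquad f=\sigma(g).
\end{align}
The plan has four steps. First, show that a large $\lambda$ forces $\mathcal{R}_{cf}\to 0$. Second, deduce that $f(\Psi(\mathbf{0},Z_e))\to 0$ on the support of $Z_e$. Third, turn this into the statement that $\mathbf{w}_e^\top\phi_e(Z_e)$ cannot vary with $Z_e$, i.e.\ its projection on the span of the centred $\phi_e(Z_e)$ vanishes. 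Fourth, transfer this to factual inputs: since $\phi$ is additive in $(Z_c,Z_e)$, the gradient is
\begin{align}
\nabla_{Z_e}f=\sigma'(\cdot)\,J_{\phi_e}(Z_e)^\top\mathbf{w}_e,
\end{align}
so $\mathcal{S}_e$ is controlled by $\|J_{\phi_e}^\top\mathbf{w}_e\|$. That quantity must go to zero.

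\textbf{Step 1 (penalty limit).} We use the standard penalty-method argument. Let $\mathbf{w}^\lambda$ be a minimizer of $\mathcal{J}_\lambda$, and compare it with a feasible reference point $\bar{\mathbf{w}}=[\bar{\mathbf{w}}_c;\mathbf{0}]$ with $\bar{\mathbf{w}}_c^\top\phi_c(\mathbf{0})$ very negative. This point has $\mathcal{R}_{cf}\le\varepsilon$ and finite loss $\mathcal{L}(\bar{\mathbf{w}})$. Since $\mathcal{L}\ge 0$, we get
\begin{align}
\lambda\,\mathcal{R}_{cf}(\mathbf{w}^\lambda)\le \mathcal{L}(\bar{\mathbf{w}})+\lambda\varepsilon .
\end{align}
Choosing $\varepsilon=\varepsilon(\lambda)\to 0$ suitably then gives $\mathcal{R}_{cf}(\mathbf{w}^\lambda)\to 0$. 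We also need $\mathcal{L}$ to stay bounded at these reference points. This uses the assumption (implicit in Theorem~1) that $\phi_c$ separates the labels, so the shift $\mathbf{w}_c^\top\phi_c(\mathbf{0})\ll 0$ can be offset on diseased samples by $\mathbf{w}_c^\top\phi_c(Z_c)$.

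\textbf{Step 2 (main obstacle).} The difficulty is that $f\to 0$ on counterfactuals only forces $g\to-\infty$; it does not force $\mathbf{w}_e^\top\phi_e$ to be constant. A very negative bias through $\mathbf{w}_c^\top\phi_c(\mathbf{0})$ could coexist with a large $\mathbf{w}_e$. I would handle this in two moves.

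(a) Use the fact that $\mathcal{L}$ is evaluated on factual data where $Y=1$ occurs. Any variation of $\mathbf{w}_e^\top\phi_e$ that is uncorrelated with $Y$ under the interventional distribution only increases $\mathcal{L}$ once the counterfactual constraint pins the background direction.

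(b) More concretely, work with the interpretation of the limiting problem as constrained minimization of $\mathcal{L}$ subject to $f(\Psi(\mathbf{0},Z_e))=0$ a.s. Read this constraint, as the paper intends, as requiring the counterfactual output to be constant in $Z_e$; equivalently, penalize $\mathbb{E}[f^2]$ with the logit parametrised around a fixed operating point. This yields $\mathbf{w}_e\perp\mathrm{span}\{\phi_e(Z_e)-\mathbb{E}\phi_e\}$.

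I expect to need an extra mild assumption, such as bounded weights or strong convexity from weight decay, to exclude the degenerate direction. I would state that assumption explicitly rather than hide it.

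\textbf{Steps 3--4 (conclusion).} Assume $J_{\phi_e}$ has range inside the span of the centred features, which holds for smooth $\phi_e$ on a connected support. Then $J_{\phi_e}^\top\mathbf{w}_e^\lambda\to 0$. With $\sigma'\le 1/4$, this gives $\mathcal{S}_e\to 0$. The causal sensitivity $\mathcal{S}_c$ is untouched by the constraint, so it remains the sole driver of the diagnosis, which reverses the conclusion of Theorem~1.
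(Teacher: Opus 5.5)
\paragraph{Verdict: genuine gap.} The gap is Step 2, which you correctly identify as the crux but do not close.

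\paragraph{Comparison with the paper.} The paper's proof obtains $f(X_{cf})\to 0$ from the stationarity condition $\nabla_\theta\mathcal{L}+2\lambda\,\mathbb{E}[f(X_{cf})\,\partial_\theta f]=0$; your penalty-method Step 1 is an alternative route to the same point. The paper then simply asserts $\mathbf{w}_e^\top\phi_e(Z_e)\to 0$. That is exactly the inference you flag, and it is not even consistent for $f=\sigma(\cdot)$, since $f(X_{cf})\to 0$ forces the counterfactual logit to $-\infty$. So there is no mechanism in the paper to borrow, and neither of your two moves supplies one.

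\paragraph{Move (a) does not hold here.} $\mathcal{L}$ is an expectation over the factual distribution. There $Z_e$ is predictive of $Y$; that is the whole premise of Theorem 1. So $\mathcal{L}$ rewards, rather than penalizes, $Z_e$-variation of $\mathbf{w}_e^\top\phi_e$. The interventional distribution enters only through $\mathcal{R}_{cf}$, and the bias $\mathbf{w}_c^\top\phi_c(\mathbf{0})$ can absorb that term. The clause ``once the counterfactual constraint pins the background direction'' presupposes the conclusion.

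\paragraph{Move (b) proves a different theorem.} It replaces the stated penalty with another constraint, namely that the counterfactual output is constant in $Z_e$. The claimed equivalence also fails. The map $t\mapsto\sigma(t)^2$ is concave where $\sigma(t)>2/3$. At a fixed operating point in that regime, spreading the counterfactual logit lowers $\mathbb{E}[f^2]$, so the penalty favours a large $\mathbf{w}_e$.

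\paragraph{Under your Step-1 hypothesis the intended conclusion is false.} The target is $J_{\phi_e}^\top\mathbf{w}_e\to 0$. Suppose $\mathbf{a}^\top\phi_c$ separates the labels with margin $\gamma>0$ and $\mathbf{a}^\top\phi_c(\mathbf{0})\le-\gamma$. Then for bounded $\phi_e$ and any fixed $\mathbf{w}_e^0$, the weights $[t\mathbf{a};\mathbf{w}_e^0]$ drive both $\mathcal{L}$ and $\mathcal{R}_{cf}$ to $0$ as $t\to\infty$. The consequences are:
\begin{itemize}
\item $\inf\mathcal{J}_\lambda=0$ is not attained, so your minimizer $\mathbf{w}^\lambda$ does not exist.
\item Near-minimizers leave $\mathbf{w}_e$ completely free.
\item The literal $\mathcal{S}_e\to 0$ then holds only through saturation $\sigma'\to 0$. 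That also sends $\mathcal{S}_c\to 0$, contradicting Step 4's claim that $\mathcal{S}_c$ is untouched.
\end{itemize}

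\paragraph{Your proposed patch does not rescue this.} With $\|\mathbf{w}\|\le B$ and bounded features, $f(X_{cf})\ge\sigma(-B\sup\|\phi\|)>0$. So Step 1's conclusion $\mathcal{R}_{cf}\to 0$ becomes false.

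\paragraph{What a real proof would need.} It needs a genuinely different mechanism together with explicit hypotheses. One plausible set of ingredients:
\begin{itemize}
\item Use that $\sigma(t)^2$ is convex where $\sigma(t)<2/3$. With counterfactual logits in that range, Jensen makes any nonconstant $\mathbf{w}_e^\top\phi_e$ strictly increase $\mathcal{R}_{cf}$ at fixed mean logit.
\item Add norm control that prevents the bias from absorbing the penalty for free.
\item Add a centring condition on $\phi_e$, so that $\mathbf{w}_e$ cannot itself act as a bias.
\end{itemize}
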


\begin{proof}
The optimal parameters $\theta^*$ must satisfy the stationary condition $\nabla_{\theta} \mathcal{J} \!=\! 0$:
\begin{align}
    \nabla_{\theta} \mathcal{L} + \lambda \nabla_{\theta} \mathcal{R}_{cf} = 0.
\end{align}
Substituting the gradient of the penalty term $\mathcal{R}_{cf}$:
\begin{align}
    \nabla_{\theta} \mathcal{L} + 2\lambda \mathbb{E} \left[ f(X_{cf}) \cdot \frac{\partial f}{\partial \theta} \right] = 0.
\end{align}
As $\lambda \rightarrow \infty$, for the equation to hold, the model must satisfy $f(X_{cf}) \rightarrow 0$. Given $X_{cf} = \Psi(\mathbf{0}, Z_e)$, this implies:
\begin{align}
    \mathbf{w}_e^\top \phi_e(Z_e) \rightarrow 0, \quad \forall Z_e \in \mathcal{Z}_e.
\end{align}
Consequently, the sensitivity to spurious factors vanishes:
\begin{align}
    \mathcal{S}_e = \left\| \frac{\partial f}{\partial Z_e} \right\| = \left\| \mathbf{w}_e^\top \frac{\partial \phi_e}{\partial Z_e} \right\| \rightarrow 0.
\end{align}
To minimize the remaining $\mathcal{L}$ on the original samples, the model must re-orient its gradient flow toward $\mathbf{w}_c$, maximizing the reliance on causal features $Z_c$.
\end{proof}

\subsection{Counterfactual-Driven GRPO for Causal Alignment}

Guided by the theoretical insights in Sec.\ref{sec:theory}, we introduce a reinforcement learning framework termed counterfactual-driven GRPO. 
This stage operationalizes the counterfactual intervention to explicitly reward the model for grounding its diagnosis in causal lesion features.

\noindent\textbf{Counterfactual Normal Sample Synthesis.}
To eliminate visual bias from background shortcuts, we construct counterfactual samples where lesion features are erased while the environment remains identical.
First, the MLLM generates an initial lesion bounding box, which is rigorously refined by experts to define the precise lesion mask $\mathbb{M}$. 
Second, we apply high-intensity Gaussian smoothing to obliterate diagnostic features within $\mathbb{M}$:
\begin{align}
    \mathbf{x}_{cf} = \mathbf{x} \odot (1 - \mathbb{M}) + \mathcal{G}(\mathbf{x}, \sigma) \odot \mathbb{M}.
\end{align}
Finally, we assign a normal label and a corresponding negative reasoning chain to $\mathbf{x}_{cf}$. 
This paired sample $(\mathbf{x}_{cf}, \mathbf{r}_{cf}, \mathbf{l}_{cf})$ forces the model to ground its diagnosis strictly in lesion features; if it predicts pathology based on the unchanged background in $\mathbf{x}_{cf}$, it incurs a high optimization penalty.

\noindent\textbf{Clinical-Cognition-Centric Rewards.}
To ensure the reasoning chain $\mathbf{r}$ is both structurally compliant and causally grounded, we design several rewards.

\noindent\textit{Output Format Reward.}
To enforce the model adherence to the strict hierarchical structure defined in our clinical cognitive pathway, we design a Format Reward $R_{fmt}$. 
The model's output $\mathbf{y}$ must sequentially cover three critical sections: (1) Location \& Imaging Environment, (2) Mucosal Morphology \& Focal Lesions, and (3) Surface Texture \& Microvascular Architecture.
The reward function is defined as an all-or-nothing constraint:
\begin{align}
    R_{fmt}(y) = \mathbb{I}\left( \bigwedge_{s \in \mathcal{S}} (s \in y) \right),
\end{align}
where $\mathcal{S}$ represents the set of required section headers and $\mathbb{I}(\cdot)$ is the indicator function. 
If any section is missing, the reward is 0; otherwise, it is 1. This forces the model to maintain structural integrity during generation.

\noindent\textit{Clinical Cognition Reward.}
Merely following the correct format is insufficient; the content must capture specific semiological details. 
We propose a Clinical Cognition Reward $R_{cog}$ to enforce semantic precision. 
For each ground truth reasoning chain, we utilize an LLM to pre-extract a set of critical keywords $K_{gt}$, consisting of exactly three key features for each of the three cognitive sections, totaling $|K_{gt}| = 9$ keywords. 
During training, we directly verify the presence of these keywords within the generated response $\mathbf{y}$. 
The reward is calculated as:
\begin{align}
    R_{cog}(\mathbf{y}, K_{gt}) = \frac{1}{9} \sum_{k \in K_{gt}} \mathbb{I}(k \in \mathbf{y}),
\end{align}
where $\mathbb{I}(k \in \mathbf{y})$ is an indicator function that returns 1 if the keyword $k$ appears in the generated text $\mathbf{y}$. 
This mechanism ensures the model explicitly articulates all critical diagnostic criteria across the hierarchy.

\noindent\textit{Diagnostic Consistency Reward.}
The Diagnostic Consistency Reward $R_{diag}$ evaluates the final conclusion extracted from the model's response. Let $\mathbf{l}$ be the diagnosis parsed from $\mathbf{y}$ and $\mathbf{l}_{gt}$ be the ground truth label.
\begin{align}
    R_{diag}(\mathbf{y}, \mathbf{y}_{gt}) = 
    \begin{cases} 
    1, & \text{if } \mathbf{l} = \mathbf{l}_{gt}, \\
    0, & \text{otherwise}.
    \end{cases}
\end{align}
This reward ensures that the reasoning chain culminates in the correct result.

\noindent\textit{GRPO Optimization.}
To align the model with the proposed rewards efficiently, we employ Group Relative Policy Optimization (GRPO), which estimates the baseline directly from the group average of sampled outputs. 
For each input query $\mathbf{q}$, we sample $G$ outputs $\{y_1, \dots, y_G\}$ from the current policy $\pi_{\theta_{old}}$. 
We first compute the total reward $r_i = R_{fmt}(y_i) + \lambda_1 R_{cog}(y_i) + \lambda_2 R_{diag}(y_i)$ for each output $y_i$. To reduce gradient variance, we calculate the normalized group advantage $\hat{A}_i = (r_i - \mu_r)/(\sigma_r + \epsilon)$, where $\mu_r$ and $\sigma_r$ are the mean and standard deviation of the rewards within the sampled group. 
Finally, we optimize the policy $\pi_\theta$ by maximizing the following surrogate objective alongside a KL divergence penalty to prevent deviation from the reference model $\pi_{ref}$:
\begin{equation}
    \mathcal{J}_{GRPO}(\theta) \!=\! \mathbb{E}_{q \sim D} \!\left[ \frac{1}{G} \sum_{i=1}^G \left( \min \left( \rho_i \hat{A}_i, \text{clip}(\rho_i, 1\!-\!\epsilon, 1\!+\!\epsilon) \hat{A}_i \right) \!- \!\beta \mathbb{D}_{KL}(\pi_\theta || \pi_{ref}) \right) \!\right]
\end{equation}
where $\rho_i = \pi_\theta(y_i|q) / \pi_{\theta_{old}}(y_i|q)$ is the probability ratio.

\section{Experiments}

\subsection{Experiment Setup}

\noindent\textbf{Implementation Details.}
We implement the two-stage training using the SWIFT framework with eight NVIDIA L20 GPUs. Stage 1 performs SFT for 400 steps using the AdamW optimizer (learning rate $1 \times 10^{-4}$, cosine scheduler) and a global batch size of 128. The vision encoder and aligner are frozen, while we apply LoRA \cite{hu2022lora} (rank 16, $\alpha=32$) to all linear modules, capping sequence length at 2048 tokens. Stage 2 applies GRPO \cite{guo2025deepseek} for 200 steps. This phase continues LoRA optimization with a global batch size of 256, a reduced learning rate of $1 \times 10^{-6}$, and a KL-divergence penalty $\beta=0.04$. For each query, we sample $G=8$ generations and compute an additive reward weighting format, clinical cognition, and diagnostic consistency at 1.0, 1.0, and 2.0, respectively. 

\begin{table}[t]
\centering
\scriptsize
\setlength{\tabcolsep}{7pt} 
\caption{
\textbf{Quantitative comparison on five gastrointestinal benchmarks.}
We evaluate CogAlign against diverse models.
Abbreviations: CI. (CrohnIPI), GV. (GastroVision), HK. (HyperKvasir), KC. (Kvasir-Capsule), SA. (The SEE-AI Project).
}
\vspace{-3mm}
\label{tab:main_results}
\begin{tabular}{lcccccc}
\toprule
\textbf{Model} & \textbf{CI.} & \textbf{GV.} & \textbf{HK.} & \textbf{KC.} & \textbf{SA.} & \textbf{Average} \\
\midrule
\rowcolor{gray!15} \multicolumn{7}{l}{\textit{\textbf{Large Foundation Models}}} \\
Gemini 3 Flash & 20.87\% & 38.46\% & 43.24\% & 18.32\% & 15.01\% & 20.69\% \\
Gemini 3 Pro & 30.58\% & 44.73\% & 44.40\% & 21.83\% & 19.20\% & 24.82\% \\
GPT-5 Nano & 1.94\% & 3.99\% & 10.81\% & 2.77\% & 5.14\% & 5.06\% \\
GPT-5 Mini & 10.19\% & 11.97\% & 20.46\% & 6.50\% & 9.04\% & 10.04\% \\
GPT-5.2 & 6.80\% & 18.80\% & 33.20\% & 5.32\% & 8.32\% & 11.13\% \\
Qwen3-VL-Flash & 43.20\% & 56.98\% & 61.00\% & 30.35\% & 31.65\% & 36.93\% \\
Qwen3-VL-Plus & 52.91\% & 64.10\% & 72.78\% & 34.72\% & 33.63\% & 41.16\% \\
\midrule
\rowcolor{gray!15} \multicolumn{7}{l}{\textit{\textbf{Medical Foundation Models}}} \\
Hulu-Med-4B & 18.45\% & 13.68\% & 7.92\% & 6.50\% & 6.55\% & 7.72\% \\
Hulu-Med-7B & 19.42\% & 13.39\% & 9.46\% & 10.86\% & 6.22\% & 8.58\% \\
\midrule
\rowcolor{gray!15} \multicolumn{7}{l}{\textit{\textbf{Small Foundation Models}}} \\
Qwen3-VL-2B & 18.93\% & 32.48\% & 33.20\% & 11.71\% & 12.01\% & 16.05\% \\
Qwen3-VL-4B & 36.89\% & 52.99\% & 50.39\% & 22.04\% & 25.50\% & 30.03\% \\
Qwen3-VL-8B & 39.32\% & 47.01\% & 67.57\% & 30.14\% & 29.22\% & 35.30\% \\
\midrule
\rowcolor{gray!15} \multicolumn{7}{l}{\textit{\textbf{SFT on The Proposed Dataset}}} \\
Qwen3-VL-2B (SFT) & 41.26\% & 73.50\% & 87.26\% & 50.16\% & 48.39\% & 54.49\% \\
Qwen3-VL-4B (SFT) & 55.34\% & 76.07\% & 86.10\% & 64.75\% & 55.23\% & 61.98\% \\
Qwen3-VL-8B (SFT) & 62.14\% & 76.92\% & 89.38\% & 72.74\% & 58.77\% & 66.31\% \\
\midrule
\rowcolor{green!15} \multicolumn{7}{l}{\textit{\textbf{Our Proposed Models}}} \\
\rowcolor{green!5}
CogAlign-2B & 50.00\% & 73.79\% & 89.77\% & 53.99\% & 50.96\% & 57.40\%\\
\rowcolor{green!5}
CogAlign-4B & 59.22\% & 76.35\% & 89.19\% & 66.77\% & 57.22\% & 64.05\% \\
\rowcolor{green!5}
CogAlign-8B &  \textbf{63.11\%} & \textbf{77.21\%} & \textbf{91.51\%} & \textbf{74.01\%} & \textbf{60.18\%} & \textbf{67.67\%} \\
\bottomrule
\end{tabular}
\vspace{-6mm}
\end{table}

\noindent\textbf{Baselines.}
We evaluate the performance of CogAlign against a comprehensive suite of SoTA models.
For the large foundation models, we include proprietary systems such as Gemini 3 Flash, Gemini 3 Pro, GPT-5.2, GPT-5 Mini, and GPT-5 Nano. 
We also benchmark against the Qwen3-VL series \cite{bai2025qwen3}, specifically Qwen3-VL-Flash and Qwen3-VL-Plus.
To assess the effectiveness of domain specific adaptation, we compare our framework with medical foundation models including HuluMed-4B and HuluMed-7B \cite{jiang2025hulu}. 
Furthermore, we evaluate small scale foundation models such as Qwen3-VL-2B, Qwen3-VL-4B, and Qwen3-VL-8B \cite{bai2025qwen3}. 
To isolate the contributions of our strategy, we include three internal variants: Qwen3-VL-2B (SFT), Qwen3-VL-4B (SFT) and Qwen3-VL-8B (SFT). 

\noindent\textbf{Evaluation Details.} 
We evaluate CogAlign on a comprehensive test suite comprising a total of 4,779 endoscopic samples across five distinct datasets. 
These benchmarks include CrohnIPI \cite{vallee2020crohnipi}, GastroVision \cite{jha2023gastrovision}, HyperKvasir \cite{borgli2020hyperkvasir}, Kvasir-Capsule \cite{smedsrud2021kvasir}, and The SEE-AI Project \cite{yokote2024small}.
Notably, The SEE-AI Project presents a significantly higher diagnostic challenge as it contains 235 multi-label samples, requiring the model to identify co-occurring pathologies simultaneously.
We report accuracy as the primary evaluation metric. 
For the multi-label cases, we employ a strict accuracy standard where a prediction is considered correct only if it exactly matches the complete set of ground truth pathologies.

\subsection{Main Results}

\noindent\textbf{Comparison with Large Foundation Models.} 
Despite their massive parameter scales, general-purpose MLLMs often struggle in specialized medical contexts. 
As illustrated in Tab.~\ref{tab:main_results}, proprietary models like Gemini 3 Pro and GPT-5 series achieve moderate performance but lack consistency. 
Qwen3-VL-Plus perform better, yet they still fall short in challenging scenarios like Kvasir-Capsule and The SEE-AI Project. 
In contrast, our CogAlign achieves a remarkable accuracy, surpassing Qwen3-VL-Plus by a significant margin.

\noindent\textbf{Comparison with Medical Foundation Models.} 
Specialized medical models do not exhibit a competitive edge. This underperformance can be attributed to their training paradigms, which often focus on general medical visual-question answering rather than the rigorous, fine-grained visual recognition required for gastrointestinal endoscopy. 
The clinical cognition alignment strategy effectively bridges this gap, ensuring the model attends to lesion features.

\begin{table}[t]
\centering
\scriptsize
\setlength{\tabcolsep}{13pt} 
\caption{
\textbf{Breakdown of Single-Label vs. Multi-Label diagnostic accuracy.}
We evaluate the ability to identify concurrent pathologies.
}
\vspace{-3mm}
\label{tab:single_multi_acc}
\begin{tabular}{lccc}
\toprule
\textbf{Model} & \textbf{Single-Label} & \textbf{Multi-Label} & \textbf{Average} \\
\midrule
\rowcolor{gray!15} \multicolumn{4}{l}{\textit{\textbf{Large Foundation Models}}} \\
Gemini 3 Flash & 21.68\% & 1.70\% & 20.69\% \\
Gemini 3 Pro & 26.06\% & 0.85\% & 24.82\% \\
GPT-5 Nano & 5.30\% & 0.43\% & 5.06\% \\
GPT-5 Mini & 10.43\% & 2.55\% & 10.04\% \\
GPT-5.2 & 11.69\% & 0.43\% & 11.13\% \\
Qwen3-VL-Flash & 38.34\% & 9.79\% & 36.93\% \\
Qwen3-VL-Plus & 42.76\% & 10.21\% & 41.16\% \\
\midrule
\rowcolor{gray!15} \multicolumn{4}{l}{\textit{\textbf{Medical Foundation Models}}} \\
Hulu-Med-4B & 8.12\% & 0.00\% & 7.72\% \\
Hulu-Med-7B & 9.02\% & 0.00\% & 8.58\% \\
\midrule
\rowcolor{gray!15} \multicolumn{4}{l}{\textit{\textbf{Small Foundation Models}}} \\
Qwen3-VL-2B & 16.81\% & 1.28\% & 16.05\% \\
Qwen3-VL-4B & 31.27\% & 5.96\% & 30.03\% \\
Qwen3-VL-8B & 36.77\% & 6.81\% & 35.30\% \\
\midrule
\rowcolor{gray!15} \multicolumn{4}{l}{\textit{\textbf{SFT on The Proposed Dataset}}} \\
Qwen3-VL-2B (SFT) & 56.91\% & 7.66\% & 54.49\% \\
Qwen3-VL-4B (SFT) & 64.66\% & 10.21\% & 61.98\% \\
Qwen3-VL-8B (SFT) & 69.19\% & 10.64\% & 66.31\% \\
\midrule
\rowcolor{green!15} \multicolumn{4}{l}{\textit{\textbf{Our Proposed Models}}} \\
\rowcolor{green!5}
CogAlign-2B & 59.93\% & 8.09\% & 57.38\% \\
\rowcolor{green!5}
CogAlign-4B & 66.81\% & 10.64\% & 64.05\% \\
\rowcolor{green!5}
CogAlign-8B & \textbf{70.47\%} & \textbf{13.62\%} & \textbf{67.67\%} \\
\bottomrule
\end{tabular}
\vspace{-6mm}
\end{table}

\subsection{Multi-Label Disease Diagnosis}
In real-world clinical environments, patients frequently present with concurrent gastrointestinal pathologies, requiring models to identify multiple co-occurring conditions rather than a single dominant lesion. 
As shown in Tab.~\ref{tab:single_multi_acc}, general foundation models struggle significantly in this setting, often exhibiting tunnel vision where secondary pathologies are ignored; for instance, specialized medical models like Hulu-Med-7B completely fail to detect multi-label cases. 
In contrast, our CogAlign framework demonstrates superior performance. 
This improvement confirms that our hierarchical reasoning chain and counterfactual-driven reinforcement learning effectively force the model to conduct a comprehensive scan of the mucosal surface rather than fixating on spurious or singular features.

\begin{figure}[t]
    \centering
    \includegraphics[width=\linewidth]{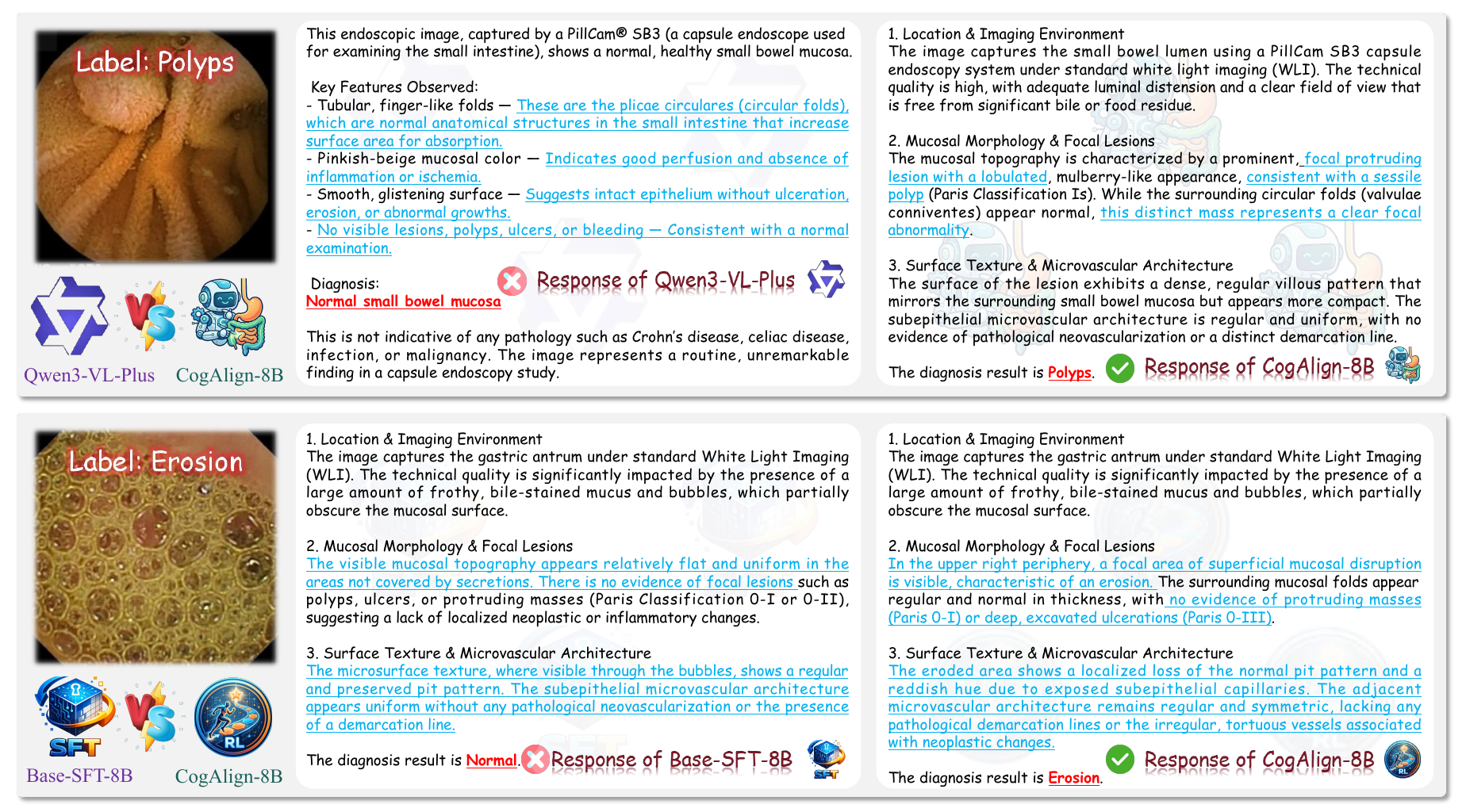}
    \vspace{-6mm}
    \caption{\textbf{Case study between CogAlign and baseline models.} The top row demonstrates CogAlign's ability to detect a subtle polyp via hierarchical clinical cognition. 
    The bottom row highlights CogAlign's robustness to visual noise in identifying erosion.}
    \label{fig:case}
    \vspace{-6mm}
\end{figure}

\subsection{Case Study}
To provide a qualitative evaluation of our proposed approach, we present a comparative case study in Fig. \ref{fig:case}. 
The top row illustrates the superiority of our framework over the general foundation model Qwen3 VL Plus. 
In this scenario, the endoscopic image contains a subtle polyp. 
The general model fails to identify the lesion and incorrectly predicts a normal mucosa. 
Conversely, our model leverages the internalized clinical cognitive pathway to systematically analyze the image. 
By sequentially evaluating the anatomical location, mucosal morphology, and microscopic details, our model accurately detects the lobulated protruding lesion and correctly concludes the diagnosis as polyps. 

The bottom row highlights the effectiveness of our counterfactual driven reinforcement learning stage by comparing the full pipeline against the Base-SFT-8B variant. 
The input image is heavily obscured by environmental noise, specifically frothy bile stained mucus and bubbles. 
The Base-SFT-8B model, lacking causal diagnostic grounding, is misled by these environmental artifacts and hallucinates a normal diagnosis. 
In contrast, our fully trained model successfully ignores the spurious visual noise. 
Guided by the causal alignment phase, it focuses precisely on the superficial mucosal disruption and accurately identifies the erosion.

\begin{figure}[t]
    \centering
    \includegraphics[width=\linewidth]{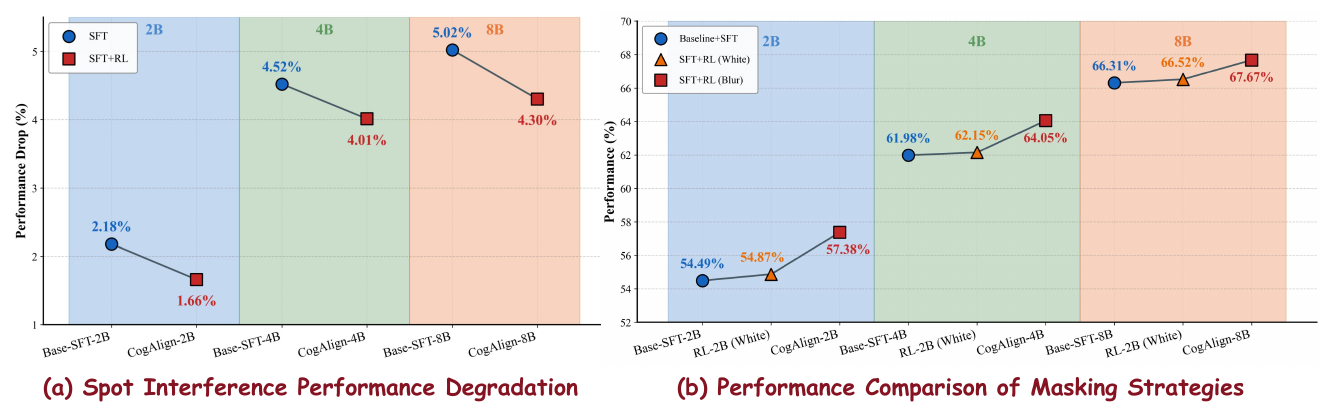}
    \vspace{-6mm}
    \caption{\textbf{Detailed analysis of model robustness and masking strategies.} (a) Performance degradation under spot interference. (b) Comparison of masking techniques.}
    \label{fig:analysis}
    \vspace{-6mm}
\end{figure}

\subsection{Robustness Analysis}
To evaluate the resilience of our proposed framework against environmental interference, we conduct a robustness analysis by applying simulated spot interference to the test images. 
This technique explicitly simulates the mucosal bubbles and specular reflections that frequently corrupt clinical endoscopic observations
As illustrated in Fig. \ref{fig:analysis}(a), the baseline models fine tuned only with SFT suffer a severe degradation in diagnostic accuracy when exposed to visual perturbations. 
This vulnerability indicates that standard training paradigms overfit to spurious background correlations. 
In contrast, the complete CogAlign framework exhibits remarkable stability, maintaining high performance across all model scales. 

\subsection{Selection of Masking Strategy}
The generation of counterfactual normal samples requires obliterating pathological evidence while preserving the surrounding contextual environment. 
We investigate the impact of different erasure techniques by comparing solid white masking against high intensity Gaussian blurring. 
As depicted in Fig. \ref{fig:analysis}(b), employing a Gaussian blur to synthesize counterfactuals yields consistently higher diagnostic accuracy compared to utilizing solid white patches. 
We attribute this performance discrepancy to the naturalness of the modified images. 
Solid white masks introduce sharp artificial boundaries and out of distribution visual signals that can destabilize the reinforcement learning optimization process. 
Conversely, Gaussian blurring effectively neutralizes the diagnostic features while maintaining a smooth visual texture, thereby providing a reliable reference for causal rectification and enabling the model to accurately isolate lesion representations.

\subsection{Ablation Study}

\noindent\textbf{Effect of Clinical Cognition Alignment.} 
To validate the necessity of bridging the gap between general reasoning and standardized clinical protocols, we compare the performance of the vanilla foundation models against those fine-tuned on our hierarchical clinical cognition dataset. 
As observed in Fig.~\ref{fig:ablation}, applying our clinical cognition alignment via SFT dramatically significantly boosts this performance. 
This substantial improvement confirms that explicitly internalizing the expert cognitive flow is essential for unlocking the potential of MLLMs.

\noindent\textbf{Effect of Clinical Cognition Reward.} 
To assess the impact of rewards, we conduct an ablation study by removing the Clinical Cognition Reward $R_{cog}$ from the full reward schema. 
As shown in Fig.~\ref{fig:ablation}, removing $R_{cog}$ leads to a noticeable degradation in performance.
Specifically, in the absence of constraints on semantic clinical features, the model's intermediate reasoning degrades into vague descriptions that lack genuine visual-pathological grounding.

\begin{figure}[t]
    \centering
    \includegraphics[width=\linewidth]{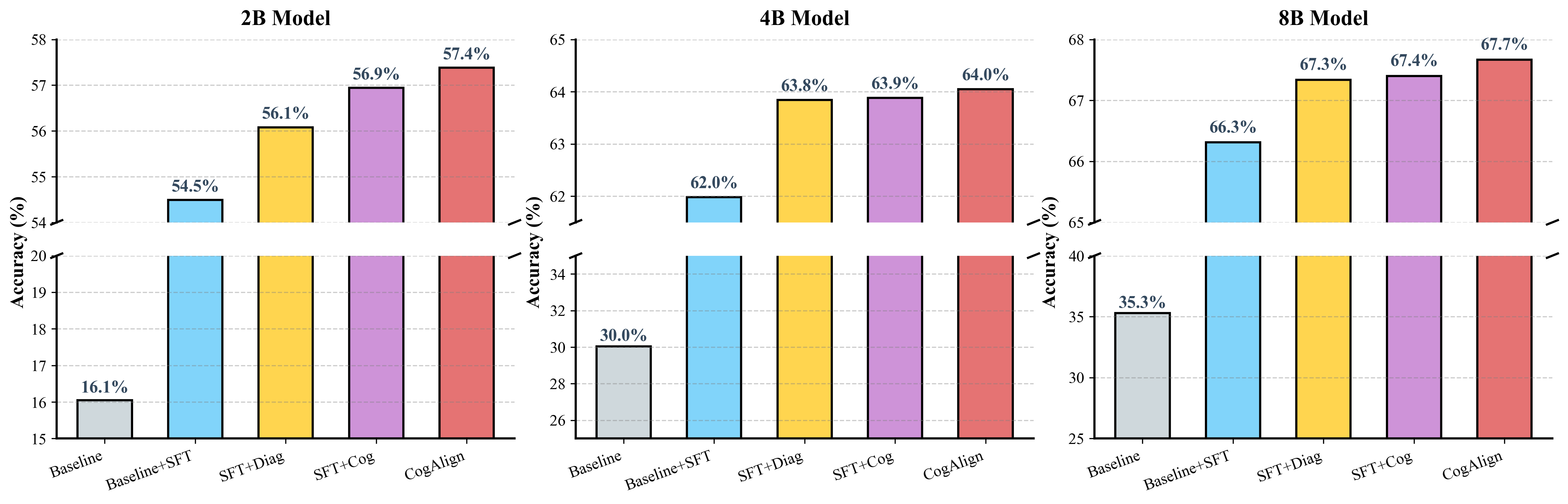}
    \vspace{-6mm}
    \caption{\textbf{Ablation study of individual modules in the CogAlign.} We examine the contribution of SFT and the proposed reinforcement learning rewards.}
    \label{fig:ablation}
    \vspace{-6mm}
\end{figure}

\noindent\textbf{Effect of Diagnostic Consistency Reward.} 
We further evaluate the contribution of the Diagnostic Consistency Reward $R_{diag}$. 
By excluding $R_{diag}$ and relying solely on the format and cognition rewards, the model focuses heavily on generating descriptive text but occasionally fails to draw the correct conclusion from its own analysis. 
Experimental results in Fig.~\ref{fig:ablation} indicate that removing this reward causes a significant decline in diagnostic accuracy. 

\section{Conclusion}
In this paper, we proposed CogAlign, a novel framework designed to bridge the cognitive gap between general MLLMs and the rigorous standards of gastrointestinal diagnosis. 
Addressing the critical challenges of clinical cognitive misalignment and causal disconnect, we introduced a systematic clinical cognition alignment strategy. 
First, we constructed a hierarchical clinical cognition dataset and employed SFT to internalize expert-level diagnostic logic, compelling the model to strictly follow a trajectory from anatomical localization and morphological evaluation to micro-detail analysis. 
Second, guided by our theoretical analysis on shortcut convergence, we implemented a counterfactual-driven GRPO strategy. 
By utilizing counterfactual normal samples and clinical-cognition-centric rewards, we enforced causal rectification, ensuring diagnoses are grounded in pathological lesion features. 
Extensive experiments across five diverse benchmarks demonstrate that CogAlign establishes a new SoTA, significantly enhancing diagnostic performance in complex clinical scenarios.



%
%
\bibliographystyle{splncs04}
\bibliography{main}

\clearpage

\end{document}